\documentclass[journal]{ijcas}

\hyphenation{op-tical net-works semi-conduc-tor}

\usepackage{times}
\usepackage{graphicx}
\usepackage{amssymb}
\usepackage{gensymb}
\usepackage{amsmath}
\usepackage{amsthm}
\usepackage{xcolor}
\usepackage{siunitx}
\usepackage{booktabs}
\usepackage{multirow}
\usepackage[normalem]{ulem}

\usepackage[ruled,vlined,linesnumbered]{algorithm2e}

\journalvolumn{VV}
\journalnumber{X}
\journalyear{YYYY}
\setarticlestartpagenumber{1}
\begin{document}

\title{
Aerial Chasing of a Dynamic Target in Complex Environments}

\begin{abstract}
Rapidly generating an optimal chasing motion of a drone to follow a dynamic target among obstacles is challenging due to numerical issues rising from multiple conflicting objectives and non-convex constraints.  
This study proposes to resolve the difficulties with a fast and reliable pipeline that incorporates 1) a target movement forecaster and 2) a chasing planner. They are based on a \textit{sample-and-check} approach that consists of the generation of high-quality candidate primitives and the feasibility tests with a light computation load.    
We forecast the movement of the target by selecting an optimal prediction among a set of candidates built from past observations.
Based on the prediction, we construct a set of prospective chasing trajectories which reduce the high-order derivatives, while maintaining the desired relative distance from the predicted target movement. Then, the candidate trajectories are tested on safety of the chaser and visibility toward the target without loose approximation of the constraints.
The proposed algorithm is thoroughly evaluated in challenging scenarios involving dynamic obstacles. Also,
the overall process from the target recognition to the chasing motion planning is implemented fully onboard on a drone, demonstrating real-world applicability. 
\end{abstract}

\begin{keywords}
Perception and autonomy, autonomous target following, aerial videography.
\end{keywords}

\maketitle

\runningtitle{2021}{Boseong Felipe Jeon, Changhyeon Kim, Hojoon Shin and 
        H. Jin Kim}{Aerial Chasing of a Dynamic Target in Complex Environments}{xxx}{xxxx}{x}

\makeAuthorInformation{
Boseong Felipe Jeon, Changhyeon Kim, Hojoon Shin and 
        *H. Jin Kim are with Aerospace Engineering, Seoul National University, South Korea (e-mails: \{junbs95,hyun91015,asdwer20\}@gmail.com and hjinkim@snu.ac.kr).
        
* Corresponding author.

This material is based upon work supported by the Ministry of Trade, Industry \& Energy(MOTIE, Korea) under Industrial Technology Innovation Program. No.10067206, 'Development of Disaster Response Robot System for Lifesaving and Supporting Fire Fighters at Complex Disaster Environment'
}

\theoremstyle{lemma}
\theoremstyle{proposition}

\theoremstyle{problem}
\newtheorem{problem}{Problem}

%
\newcommand{\pos}{\mathbf{x}}
\newcommand{\chaserPos}{\mathbf{x}_p}
\newcommand{\targetPosTrue}{\mathbf{x}_q}
\newcommand{\targetPos}{\hat{\mathbf{x}}_q}
\newcommand{\targetMean}{\mathbf{m}}
\newcommand{\targetVar}{Q}

\newcommand{\obstaclePos}{\mathbf{x}_r}
\newcommand{\obstaclePosJ}{\mathbf{x}_{r_j}}

\newcommand{\R}{\mathbb{R}}
\newcommand{\Rpos}{\mathbb{R}^3}
\newcommand{\polyBaseM}{\mathbf{e}^d_m(t)}
\newcommand{\polyBase}{\mathbf{c}}
\newcommand{\polyOrder}{m}
\newcommand{\polyCoeffChaser}{\mathbf{p}}
\newcommand{\polyCoeffTarget}{\mathbf{q}}
\newcommand{\polyCoeffObstacle}{\mathbf{r}}

\newcommand{\ViewPointSet}{\mathcal{U}}

\section{Introduction}
Absence of the exact prior about the future motion of the target of interest and the environment is one of the most challenging conditions in target-following tasks using a drone such as aerial videography and surveillance.    
The success of missions under the condition hinges on the reliability and efficiency \textcolor{black}{of the strategies to predict the target of interest and to generate the chasing motion of the drone based on the prediction.}
\label{sec:relwork}
\textcolor{black}{Among various studies related with chasing motion strategies,  
we focus on the recent works which constructed and examined real-time chasing planners for a drone in multiple cluttering obstacles.
Specifically, we highlight 
1) \textcolor{black}{non-convex-optimization-based receding horiozn planners (RHPs)}  \cite{nageli2017real,penin2018vision,bonatti2020autonomous} and 2)  corridor-based RHPs \cite{ jeon2002online,chen2016tracking,jeon2019online}. }  

The first group \cite{nageli2017real,penin2018vision,bonatti2020autonomous} generates the chasing motion of a drone based on nonlinear program (NLP) which optimizes the flight efficiency of the drone or visibility of the target with constraints such as safety. 
In this approach, the dynamics of the drone can be explicitly considered in a  transition model. Also, the field-of-view constraint corresponding to the state of the drone can be included  as in \cite{penin2018vision}. 
Nevertheless, the gradient-based optimization in NLP is subject to the numerical issues as there exist multiple conflicting objectives such as the travel efficiency, safety of the drone, visibility of the target, and the desired relative distance from the target.    

\textcolor{black}{The second group  \cite{jeon2002online,chen2016tracking,jeon2019online} decomposes a single non-convex optimization into multi-stage to resolve the numerical issues from the non-convexity of the objectives and constraints.} In these works, a preplanning step first computes a set of corridors which ensure the safety or visibility of the target, and then a smooth trajectory is computed inside the corridors. 
However, the chasing trajectory computed in the corridors might lose smoothness of  high-order derivatives. 
As noted in \cite{ding2019efficient}, this can be explained from the fact that the discrete search algorithm such as A* does not reflect the high-order derivatives in generating the corridors.

\begin{figure}[t]
    \centering
    \includegraphics[width=0.45\textwidth]{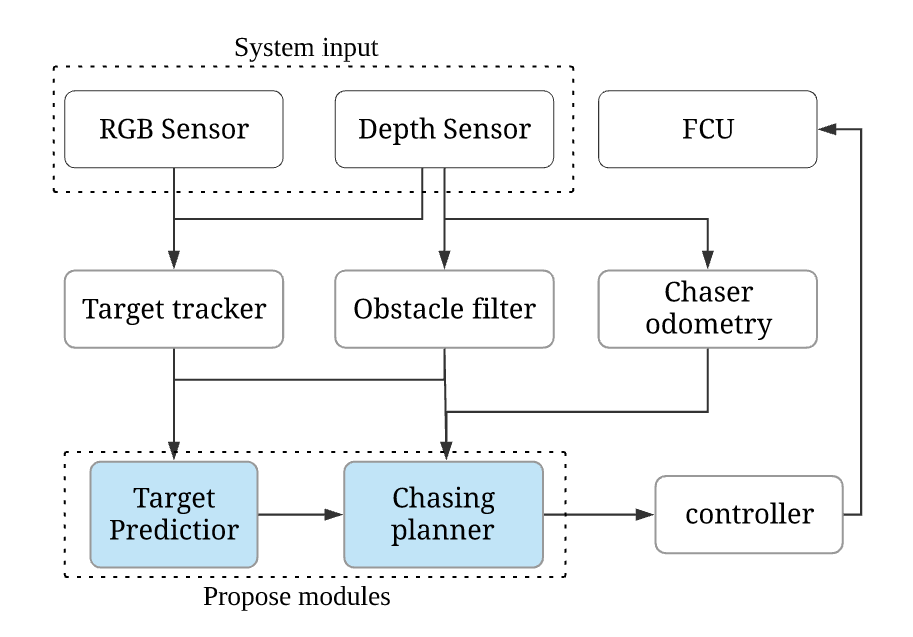}
    \caption{Overview of the proposed receding horizon chasing system implemented onboard. }
    \label{fig: onboard pipeline}
\end{figure}

\subsection{Sample-and-check strategy}
\label{sec: relwork sample and check }
To search for a direction to overcome the limitations mentioned in the previous section, 
we briefly shift our focus to recent works on a different problem: trajectory planning to reach a goal state \cite{mueller2015computationally,brescianini2018computationally}.
They plan a local trajectory toward a static goal by sampling a large set of motion primitives. Then, the feasibility of the individual candidate trajectory is checked in terms of constraints of interest, and the candidate attaining the best score is chosen as the final output. 
One of the key advantages of the methods is that the testing does not involve a loose approximation of the non-convex region. Moreover, the proposed test  can check a candidate primitive in an order of microseconds.    
Still, the feasibility checks only addressed the dynamic limits, and did not reflect other essential conditions such as collision-safety for more challenging applications.

\subsection{Contributions}
\label{subsec: contribution}
In the previous section, we highlighted the features of the sample-and-check approach \textcolor{black}{developed} to goal-reaching missions, where it enjoyed fast computation and the strict guarantee of the feasibility. To the best of our knowledge, such approach has not been applied to the chasing planner. 
Despite its capability to cope with the limitations mentioned in the related works on the chasing planner,  the following technical issues should be resolved to fully exploit the strengths: 1) efficient generation of the  high-quality candidate primitives, reflecting the movement of the target and changes of the environment, and 2) fast and sufficient checking for the primitives in terms of essential conditions in target-following missions.

This work presents an efficient chasing pipeline that addresses those issues with the following contributions:
\begin{itemize}
    \item An informed sampling method is proposed to generate candidate primitives for prediction and chasing.
    \item Fast feasibility checking methods are presented, which provide sufficient conditions for safety and visibility of a sampled primitive
    \item A systematic comparison is performed with multiple relevant works in various cases.
    \item An onboard integration including the target tracking, target prediction, and chasing planner is implemented and tested on an actual drone.  
    
\end{itemize}

The rest of this paper is structured as follows:
first, section \ref{sec: system overview} gives the overview of the proposed chasing system with the formulation of problems. 
Section \ref{sec_prediction} describes the method to calculate a reliable prediction  along with fast computation. 
In section \ref{sec_candid_traj}, we explain how to efficiently generate a set of high-quality candidate chasing trajectories in a closed form.
In section \ref{sec_feasi_check}, we present a strategy to check whether a given chasing candidate maintains visibility of the target against obstacles. 
The performance of the proposed pipeline is demonstrated  extensively in section \ref{sec_validation} with a comparative analysis, a high-fidelity simulation, and a real-world experiment.

\section{Overview}
\label{sec: system overview}

\subsection{{Pipeline}}
\label{subsec: system overview - pipeline}
The pipeline of the proposed system is visualized in Fig. \ref{fig: onboard pipeline}. The chasing drone is assumed to have a vision sensor suite consisting of an RGB camera and a LiDAR sensor, in order to detect the target and perform mapping obstacles.
\textcolor{black}{Utilizing them, this work focuses on the chasing pipeline where we first predict the target movement over a horizon so that the prediction does not intersect the obstacles. Then, the motion planner generates a chasing trajectory which ensures the safety of the drone and visibility of the predicted target against obstacles.
During the mission, we update the prediction and chasing motion when the prediction error exceeds a threshold or the planned trajectory becomes infeasible due to newly discovered obstacles. }

\subsection{Problem setup}
\label{sec: Problem setup}
Now, we state the problems to be solved by the chasing pipeline along with the assumptions. 
The maximum speed of the chaser is assumed to be faster than that of the target.
The future motion of the target is unknown to the chaser and should be predicted. We assume that the observation of the target's position by the sensor suite is given as Gaussian distribution. 
\textcolor{black}{The target is assumed to move smoothly minimizing high order derivatives of its state variables in a similar manner with \cite{chen2016tracking}.}
The obstacles in the environment can be either static or dynamic, both of which are modeled as ellipsoids of a constant shape whose center is a time-parameterized trajectory. 
Based on these assumptions, this section establishes two problems: 1) \textit{prediction problem}  and 2) \textit{chasing problem}. The former is addressed in Section \ref{sec_prediction}, and the latter in Section \ref{sec_candid_traj} and \ref{sec_feasi_check}.



\subsubsection{Prediction problem} 
The prediction problem focuses on forecasting the target's future motion that does not collide with obstacles  while maximizing the \textcolor{black}{likelihood of past observations}.
First, we represent the observations of the target position during the past $N_o$ time steps $t_n<0\;(1\leq n \leq N_o)$ as Gaussian distribution $\mathcal{N}(\targetMean_n,\targetVar_n)$ where $\targetMean_n \in \Rpos$ is the mean and $\targetVar_n\in\mathbb{R}^{3\times3}$ is the covariance.
Let us consider a set of $N_{obst}$ ellipsoidal obstacles with constant shape matrices $R_j\in \mathbb{R}^{3\times 3}$  and center trajectories  $\obstaclePosJ(t)\in\Rpos \; (t>0)$  where $1\leq j\leq N_{obst}$.
 Then, the safety of an arbitrary location $\pos \in\Rpos$ at $t$ with respect to the $j$th obstacle is ensured once the following condition is satisfied:
\begin{equation*}
    \lVert \pos - \obstaclePosJ(t) \rVert_{R_j}^{2} > 1  
\end{equation*}
where $\lVert \mathbf{x} \rVert_{R_j} = \sqrt{\mathbf{x}^T R_j \mathbf{x}}$ denotes the distance metric with respect to $R_j$.  
Based on the observation of the target and obstacles, the prediction trajectory $\targetPos(t) \in \Rpos$ over a future horizon $0< t \leq T$ is obtained by solving the problem below: 


\begin{problem}
(Prediction problem)
Given the observation $\mathcal{N}(\targetMean_n,\targetVar_n)$ of the dynamic target at the past time sequence $t_n <0 \;(n=1,2,\cdots,N_o)$ and the set of obstacles $(\obstaclePosJ(t),R_{j})_{j=1}^{N_{obst}}$ $(t>0)$,  the predicted trajectory of the target $\targetPos(t) $ for a future time window $t\in (0,T]$ is computed by solving the following:
\begin{equation*}
    \label{eqn_prediction_problem}
\begin{aligned}
& \underset{
\targetPos(t)}{\min}
&&\int_{0}^{T}\lVert{\hat{\mathbf{x}}}^{(2)}_q(t)\rVert^{2} dt+w_q \prod_{n=1}^{N_o}(p_{\mathcal{N}}({\hat{\mathbf{x}}_{q,n}}|\targetMean_n,\targetVar_n))^{-1}\\
& \text{{\normalfont subject to}} & & d_{q,j}(t) > 1 \; (\forall t \in (\textcolor{black}{0},T],\; \forall j \in \{1,..,N_{obst}\} ),
\end{aligned}
\end{equation*}
\end{problem}
\noindent where $d_{q,j}(t) = \lVert \targetPos(t)-\obstaclePosJ(t)\rVert_{R_j}^{2}$, ${\hat{\mathbf{x}}_{q,n}}=\targetPos(t_n)$, and $p_{\mathcal{N}}(\mathbf{x}|{\targetMean}_n,{\targetVar}_n)$ is the probability density function of  $\mathcal{N}({\targetMean}_n,{\targetVar}_n)$.

\subsubsection{Chasing problem}
Given the future prediction of the target $\targetPos(t)$ obtained by solving \textbf{Problem 1} and the obstacles $(\obstaclePosJ(t),R_j)_{j=1}^{N_{obst}}$, we establish the chasing problem to compute the motion of the chasing drone over the horizon $t\in(0,T]$ with respect to flat outputs $(\chaserPos(t),{\psi}_{p}(t))$ of the quadrotor dynamics where $\chaserPos(t)\in\Rpos$ is the position, and $\psi_{p}(t)$ is the yaw angle of the drone \cite{mellinger2011minimum}. 

\textcolor{black}{In this work, ${\psi}_{p}(t)$ is decided from the relative $x$ and $y$ coordinates between the target and chaser, so that the optical axis heads to $\targetPos(t)$ from $\chaserPos(t)$.}
In order to plan the translation $\chaserPos(t)$, we first formalize a set of objectives to be optimized, and feasibility conditions for the successful chasing mission. 

First of all, we penalize the integral of high-order derivatives (e.g. acceleration or jerk), which can be written as 
\begin{equation}
J_1 = \sum^{k_{max}}_{k=2} \int_{0}^{T} \rho_k \lVert \chaserPos^{(k)}(t) \rVert^2 dt,
\label{eqn_cost_smooth}
\end{equation}
where the set of derivative orders is chosen up to $k_{max}$. 
Next, we also consider the cost related with the safety from the $j$th obstacle from the below: 
\begin{equation}
    J_{2,\textcolor{blue}{j}} = \int_{0}^{T} c(\lVert \chaserPos(t) - \obstaclePosJ(t) \rVert_{R_j})dt,
\label{eqn_obstacle}
\end{equation}
where $c(l) \;(l\geq 0)$ is set as 
\begin{equation}
    c(l) = 
    \begin{cases}
    c_{min}+(c_{max}-c_{min}) \dfrac{(l-l_{s})^2}{l_{s}^2},  & 0 \leq l \leq l_{s} \\
    c_{min} & l > l_{s}\;.\\
    \end{cases}
    \label{eqn_cost_shaping}
\end{equation}
In \eqref{eqn_cost_shaping}, $c_{max}$, $c_{min}$ and $l_s$ are positive shaping parameters. 
In order to maintain the desired relative distance $l_{des}$ from the target, we include the following cost:

\begin{equation}
J_3 = \int_{0}^{T} {(\lVert \chaserPos(t) - \targetPos(t) \rVert - l_{des})}^2 dt.
\label{eqn_relative_dist}
\end{equation}

Also, we penalize high fluctuation of the heading of the drone to alleviate the motion blur.  For the purpose, we penalize the below: 
\begin{equation}
J_4 = \int_{0}^{T}  (\dot{\psi}_p(t))^{2} dt,
\label{yaw_rate}
\end{equation}
where $\dot{\psi}_p(t)$ is the rate of $\psi_p(t)$ expressed by the $x$ and $y$ components of  $\targetPos(t)$ and $ \chaserPos(t)$ as mentioned previously.

\begin{figure}[t] 
\centering
\includegraphics[width=0.4\textwidth]{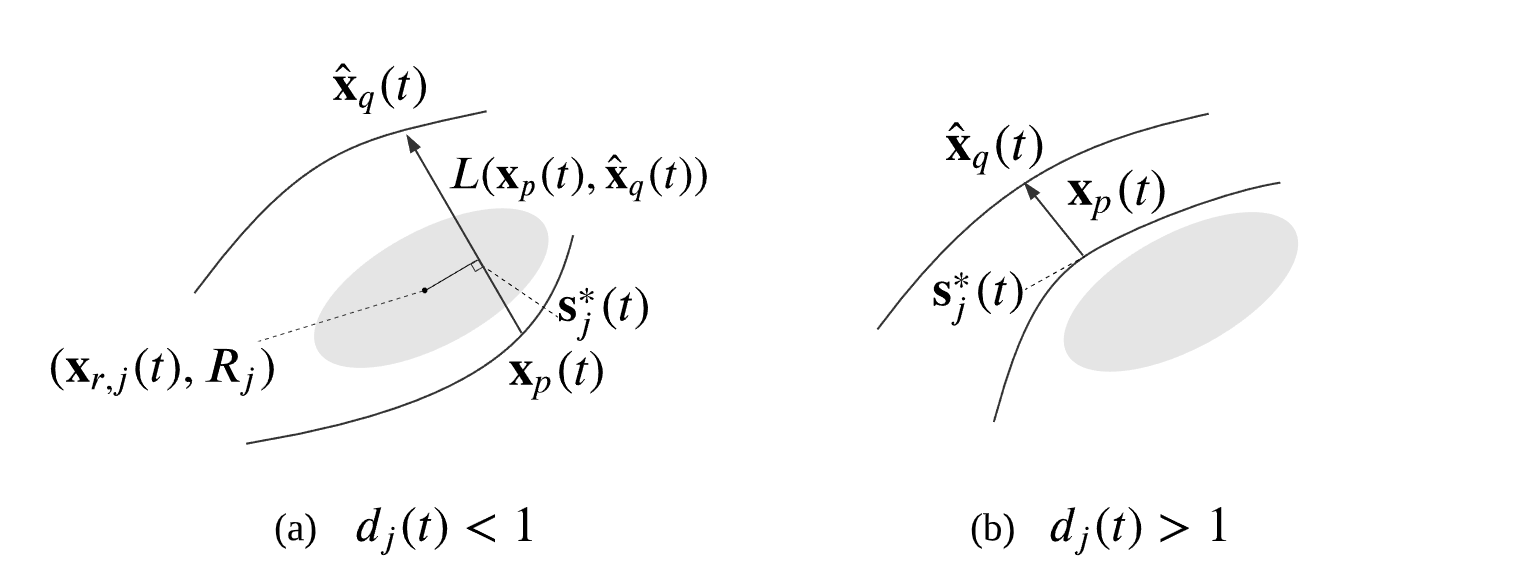}
\caption{Examples of an (a) infeasible  and (b) feasible case for chasing trajectory.
In (a),  the safety of the chaser is ensured, but the visibility is not satisfied, which determines it as an infeasible case.  In contrast,  $\chaserPos(t)$ in (b) satisfies all the feasibility conditions described as \eqref{eqn_feasibility}.
} 
\label{fig_feasibility}
\end{figure}

In addition to the objectives, the presence of obstacles requires two constraints to be satisfied by the chasing drone: 1) collision safety and 2) visibility of the target.  
For the chaser to be safe with respect to $j$th obstacle, ${\lVert \chaserPos(t)-\obstaclePosJ(t) \rVert}_{R_j}^{2}> 1$ should hold for the entire time window $(0,T]$. For the target to be visible to the chaser against obstacles, we should enforce  the line segment $L(\chaserPos(t),\targetPos(t))$ connecting the chaser $\chaserPos(t)$ and the target $\targetPos(t)$ not to intersect with any of the obstacles for the considered time window (see Fig. \ref{fig_feasibility}). To facilitate the analysis of the constraints, we introduce \textit{feasibility function} $d_j(t)$ as below: 
\begin{equation}
\begin{aligned}
         d_j(t) & = {\min}{\{\lVert \mathbf{s}(t)-\obstaclePosJ(t) \rVert}^{2}_{R_j} : \mathbf{s}(t)\in L(\chaserPos(t),\targetPos(t)) \}
\end{aligned}
\label{eqn_feasibility_function}
\end{equation}
Using the feasibility function, the conditions for safety and visibility concerning the $j$th obstacle can be encoded together as  
\begin{equation}
 d_j(t)  > 1  \; (\forall t \in (0,T]).   
 \label{eqn_feasibility}
\end{equation}
Putting the objectives and feasibility condition \eqref{eqn_cost_smooth}-\eqref{eqn_feasibility} together, we build the following motion planning problem for a drone to generate a chasing trajectory.

\begin{problem}
(Chasing problem)
Given the trajectories of the target $\targetPos(t)$ and the obstacles $(\obstaclePosJ(t),R_j)_{j=1}^{N_{obst}}$ for a horizon $(0,T]$, the chasing trajectory $\chaserPos(t)$ on the horizon is obtained by solving the following problem:
\begin{equation*}
\label{eqn_optimization}
\begin{aligned}
& \underset{
\chaserPos(t)}{\min}
&& w_{1}J_1 +w_{2}\sum_{j=1}^{N_{obst}}J_{2,j}+w_{3}J_3 + w_{4}J_4, \\ 
& \text{ {\normalfont subject to}} & &  \chaserPos^{(k)}(0) = \mathbf{x}_{p,0}^{(k)} \; (k=0, \cdots, 3) \\
& & &  d_j(t) > 1 \; (\forall t \in (0,T],\; \forall j \in \{1,..,N_{obst}\} ).
\end{aligned}
\end{equation*}
\end{problem}
\noindent where $\mathbf{x}_{p,0}^{(k)} \in \Rpos$ is the $k$th order derivative of the initial state of the chasing drone. \textbf{Problem 1} and \textbf{Problem 2} are optimization problems with multiple non-convex objectives and constraints, which render the difficulty in finding a global optimum if one relies on gradient-based optimization. To overcome the numerical issues and enjoy the advantages explained in section \ref{sec: relwork sample and check } and \ref{subsec: contribution}, \textcolor{black}{we aim to find an optimal solution of \textbf{Problem 1} and \textbf{Problem 2} based on the sample-and-check approach where we search a solution among a large set of primitives with feasibility testing.}

\section{forecasting the target trajectory}
\label{sec_prediction}
Based on the past observations  $\mathcal{N}(\targetMean_n,\targetVar_n)$ of the target from the sensor suite, we now present the method that generates prediction trajectory  to solve \textbf{Problem 1} based on the sample-and-check approach. 

\subsection{Generation of candidate prediction}
\label{subsec: Generation of candidate prediction}
Let us first consider the past time steps $\mathbf{t}_q = \{t_n |  n=1, \cdots N_o ,\; t_n<0\}$. For every time step $t_n \in \mathbf{t}_q$, we define a set consisting of $N_v$ points sampled on the contours of $\mathcal{N}(\targetMean_n,\targetVar_n)$ (see the small gray squares in Fig. \ref{fig_prediction}-(a) where $N_v = 5$). 
By selecting a point $\mathbf{v}_n \in \Rpos$ from the set for each $t_n$, we form their sequence  $\mathbf{V} = (\mathbf{v}_1,\mathbf{v}_2,...,\mathbf{v}_{N_o})$ with  $N_q = {(N_v)}^{N_o}$ possible arrangements.       
From each $\mathbf{V}$, we now derive a candidate prediction primitive $\targetPos(t)$ which efforts to track the sequence over $\mathbf{t}_q$, while optimizing the smoothness for the future horizon $0<t\leq T$. We compute $\targetPos(t)$ from the optimization below:   
\begin{equation}
\label{eqn_sampling_prediction}
 \targetPos(t) = \textrm{argmin}
 \int_{0}^{T} \lVert {\hat{\mathbf{x}}}^{(2)}_q(t) \rVert^2 dt + w \sum_{n=1}^{N_o}\lVert\targetPos(t_n) - \mathbf{v}_n \rVert^2, \\ 
\end{equation}
where $w$ is a positive importance weight. 

\begin{figure}[t] 
\centering
\includegraphics[width=0.35\textwidth]{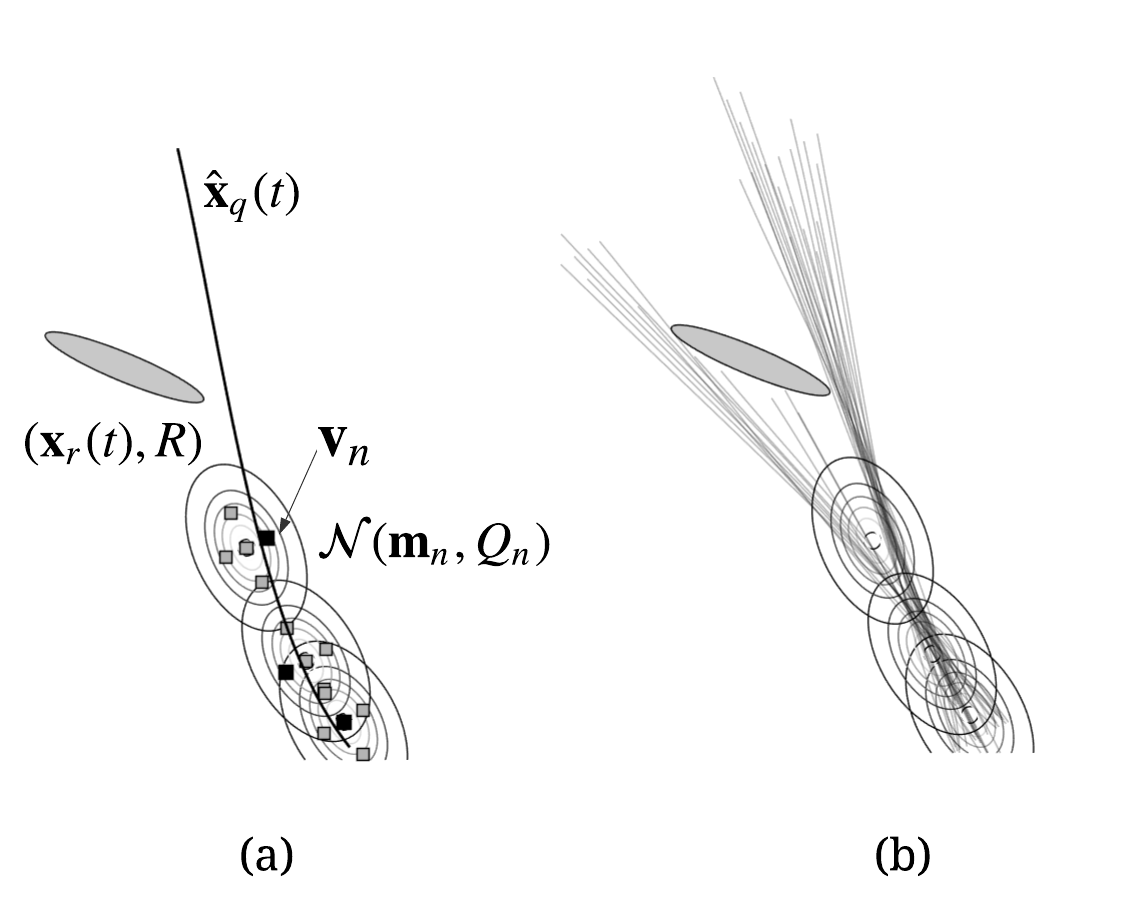}
\caption{(a) An example of a prediction primitive obtained from the sequence of points $\mathbf{V}$ (black squares) by permuting the points (gray squares) on the contours of $\mathcal{N}(\targetMean_n,\targetVar_n)$ at $t_n$. 
(b) The trajectories $\targetPos(t)$ among the set of obtained candidates, which does not collide with the obstacle.} 
\label{fig_prediction}
\end{figure}

Having the target prediction $\targetPos(t)$ is represented with a polynomial, we write  $\targetPos(t) = \mathbf{Q}^{\top}{\polyBase}_q(t)$ where  ${\polyBase}_q(t) = \begin{bmatrix}1 & t & t^2 & \cdots & t^{m_q}\end{bmatrix}^{\top}$ is a polynomial basis with degree $m_q$, and $\mathbf{Q} = \begin{bmatrix} \mathbf{q}_{_{(x)}} & \mathbf{q}_{_{(y)}} & \mathbf{q}_{_{(z)}} \end{bmatrix} \in \mathbb{R}^{(m_q+1)\times 3}$ is  coefficients for three-dimensional trajectories. Plugging them into \eqref{eqn_sampling_prediction} leads to the following unconstrained quadratic programming for each axis, \textcolor{black}{i.e.  $\mathbf{q} =  \mathbf{q}_{_{(x)}},\; \mathbf{q}_{_{(y)}}$, or  $\mathbf{q}_{_{(z)}} \in \mathbb{R}^{m_q+1} $:  } 

\begin{equation}
\label{eqn_prediction_optim_qp}
\begin{aligned}
 \underset{\mathbf{q}}{\min} \;  
\mathbf{q}^{\top}M_q\mathbf{q} + \mathbf{h}_q(\mathbf{V})^{\top} \mathbf{q}, 
\end{aligned}
\end{equation}
where 
\begin{gather*}
     {M}_q = \int_{0}^{T}\polyBase_q^{(2)}(t)(\polyBase_q^{(2)}(t))^{\top} dt + w \sum_{n=1}^{N_o} {\polyBase_q}(t_n){\polyBase_q}(t_n)^{\top}, \\
    \mathbf{h}_q(\mathbf{V}) = -2w\sum_{n=1}^{N_o}v_n\polyBase_q(t_n).
\end{gather*}
From the above equations, $v_n\in \R$ is the element of $\mathbf{v}_n$ corresponding to the considered axis. We can derive that $M_q$ is positive-definite for $N_o \geq 2$ and $m_q \geq  1$. In such setting, the element-wise coefficient $\mathbf{q}$ in  \eqref{eqn_prediction_optim_qp} is uniquely determined by:

\begin{equation}
    \label{eqn_prediction_inversion}
    \mathbf{q} = -\dfrac{1}{2}M_q^{-1}\mathbf{h}_q(\mathbf{V}).
\end{equation}
The example of a prediction candidate based on a sequence $\mathbf{V}$ can be found in Fig. \ref{fig_prediction}-(a). We now extend the above so that the coefficients of the whole set of $N_q $ possible candidate primitives can be computed simultaneously.
For the $i$th candidate ($1\leq i \leq N_q$), let us write the corresponding sequence as $\mathbf{V}_i $ and collect $\mathbf{h}_q(\mathbf{V}_i)$ into
\begin{equation*}
    H_q(\mathbf{V}_{1:N_q}) = \begin{bmatrix} \mathbf{h}_q(\mathbf{V}_1) & \mathbf{h}_q(\mathbf{V}_2) & \cdots & \mathbf{h}_q(\mathbf{V}_{N_q}) \end{bmatrix} \;.
\end{equation*}
Using this, the set of element-wise coefficients 
$\mathbf{q}_i  \; (1\leq i \leq N_q)$ can be computed by 
\begin{equation*}
\begin{bmatrix}\mathbf{q}_{1}&\mathbf{q}_{2}&\cdots&\mathbf{q}_{N_q}\end{bmatrix} = -\dfrac{1}{2}M_q^{-1}H_q(\mathbf{V}_{1:N_q}),
\end{equation*}
from which we can obtain all the candidate prediction trajectories for $\targetPos(t)$.   

\subsection{Feasibility checking and finalizing prediction}
We now focus on the feasibility check for the generated candidates with respect to the condition $d_q(t) = \lVert\targetPos(t)-\obstaclePos(t)\rVert_{R}^2>1 \; (0< t\leq T)$ in \textbf{Problem 1} (for brevity, the subscript $j$ for the obstacle index is omitted from now on). 
First, we can observe that $d_q(t)$ is a univariate polynomial on $t$ with degree $2\max\{m_q,m_r\}$ where $m_r$ is the order of considered polynomial for the obstacle center trajectory $\obstaclePos(t)$. 
In order to show $d_q(t)>1$ for $0< t\leq T$, it is sufficient to ensure the two conditions: 1) $d_q(0)>1$, and 2) $d_q(t)-1 $ has no real zeros on $0< t\leq T$. The former condition can be checked with a simple calculation of $d_q(0)$. The latter can be checked by a real-root counting algorithm for polynomials \cite{du2007amortized}. In this work, we use Sturm's theorem to count the number of distinct real roots, whose time complexity is $O(m^2)$ where $m$ is the degree of the polynomial. For a candidate prediction $\targetPos(t)$ with $d_q(0)>0$, we conclude it not to intersect with obstacles if  $d_q(t)-1=0$ does not have any real root. An example of candidate primitives which pass the tests are visualized in Fig. \ref{fig_prediction}-(b). 
Among the feasible candidates, we select the primitive which best minimizes the cost in \textbf{Problem 1}.

\section{Candidate chasing primitives generation}
\label{sec_candid_traj}

In the previous sections, we focused on the prediction of target's movement $\targetPos(t)$. Now, given  the prediction and the obstacle information $(\obstaclePos(t),R)$, \textbf{Problem 2} can be solved.

\subsection{Selection of view skeletons}
\label{subsec: Selection of view skeletons}
Given the predicted trajectory of the target $\targetPos(t) \; (0< t \leq T)$, we first consider a set of soft waypoints used as skeletons for the candidate motion primitives.  For a predicted position $\targetPos(t_n)$ at a time step $t_n \in \mathbf{t}_p = \{\dfrac{T}{N}n | n = 1 ,...,N\}$, we sample a set of $N_s$ points on a set of spheres centered at $\targetPos(t_n)$ having a different radius (see small black squares for $N_s=12$ in Fig. \ref{fig_candidate}-(a)). We call each point  \textit{view skeleton}.  We construct a sequence of points $\mathbf{U} = (\mathbf{u}_1,\mathbf{u}_2,...,\mathbf{u}_N)$ by choosing a skeleton $\mathbf{u}_n $ for each $n$, so that the number of possible sequences is  $N_p = (N_s)^{N}$. 

For each sequence $\mathbf{U}$, we now generate a candidate chasing polynomial $\chaserPos(t)$ which efforts to pass through view skeleton $\mathbf{u}_{n} $  at $t_n$ ($1\leq n \leq N$) and to minimize the high-order derivatives, while satisfying a set of initial conditions $\mathbf{x}_{p,0}^{(k)}\; (k = 0,1, ... ,3)$.
Reflecting them, we compute a candidate $\chaserPos(t)$ with the following equally constrained quadratic programming:
\begin{equation}
\label{eqn_sampling_optim}
\begin{aligned}
& \chaserPos(t) = \textrm{argmin}
&& \int_{0}^{T} \lVert {\mathbf{x}}^{(2)}_p(t) \rVert^2 dt + w_p \sum_{n=1}^{N}\lVert\chaserPos(t_n) - \mathbf{u}_n \rVert^2, \\ 
& \text{subject to} & &  \chaserPos^{(k)}(0) = \mathbf{x}_{p,0}^{(k)} \;\; (k = 0,1, ... ,3).
\end{aligned}
\end{equation}

\subsection{KKT condition for candidate polynomials}
\label{subsec: KKT condition for candidate polynomials}
In a similar manner with the derivation from \eqref{eqn_sampling_prediction} to \eqref{eqn_prediction_optim_qp}, we arrange \eqref{eqn_sampling_optim} with the coefficients of the primitives.
In order to represent the polynomial of a chasing trajectory, we define the basis of the polynomial $\mathbf{c}_p(t) = \begin{bmatrix} 1 & t & t^{2} & ... & t^{m_p} \end{bmatrix}^{\top}$, where $m_p$ is the order of polynomials for the chaser motion. Recalling the step taken in Section \ref{subsec: Generation of candidate prediction}, \eqref{eqn_sampling_optim} can be re-arranged into the following QP with an equality constraint on the per-axis polynomial coefficient $\mathbf{p}$ of $\chaserPos(t)$:
\begin{equation}
\label{eqn_sampling_optim_qp}
\begin{aligned}
& \min   
&& \mathbf{p}^{\top}M_p\mathbf{p} + \mathbf{h}_p(\mathbf{U})^T \mathbf{p}, \\
& \text{subject to} & &  A\mathbf{p} = \mathbf{b}
\end{aligned}
\end{equation}
where 
\begin{gather*}
    M_p = \int_{0}^{T}\polyBase_p^{(2)}(t)(\polyBase_p^{(2)}(t))^{\top} dt + w_p \sum_{n=1}^{N} {\polyBase_p}(t_n){\polyBase_p}(t_n)^{\top}, \\
    \mathbf{h}_p(\mathbf{U}) = -2w_p\sum_{n=1}^{N}u_n\polyBase_p(t_n), \\ 
    A = {\begin{bmatrix} \polyBase_p(0)& \polyBase_p^{(1)}(0) & ... & \polyBase_p^{(3)}(0) \end{bmatrix}}^{\top}, \\
    \mathbf{b} = {\begin{bmatrix} x_{p,0}& x^{(1)}_{p,0} & ... & x^{(3)}_{p,0}\end{bmatrix}}^{\top},
\end{gather*}
and $u_n\in \mathbb{R}$ is the element of a skeleton $\mathbf{u}_n$ along the considered axis. \textcolor{black}{Likewise, $x_{p,0}^{(k)}\;(k=0,\;\cdots,3)$ denotes the initial state of the axis.}  In  \eqref{eqn_sampling_optim_qp}, it can be derived that $A$ has row full rank if $m_p>3$ holds, and $M_p$ is a positive definite matrix if $N\geq2$ and $m_p\geq1$. 

With the parameter setting that satisfies these conditions, the following \textcolor{black}{KKT (Karush–Kuhn–Tucker)} condition yields a unique solution with an invertible KKT matrix $K$ \cite{boyd2004convex}:
\begin{equation*}
\underbrace{\begin{bmatrix}
2M_p & A^{\top} \\ A & 0 
\end{bmatrix}}_{K}
\underbrace{\begin{bmatrix}
\mathbf{p} \\ \mathbf{\lambda}
\end{bmatrix}}_{\mathbf{g}}
= 
\underbrace{\begin{bmatrix}
-\mathbf{h}_p(\mathbf{U}) \\ \mathbf{b}
\end{bmatrix}}_{\mathbf{y}},
\label{eqn_KKT}
\end{equation*}
where $\lambda$ is the associated Lagrange multiplier.
Let us denote the solution of the above as $\mathbf{g} = \begin{bmatrix}
\mathbf{p}^{\top} & \lambda^{\top} 
\end{bmatrix}^{\top}$, and the right-hand side as $\mathbf{y} \in \mathbb{R}^{(m_p+5)}$. 
Assigning an index $\mathbf{U}_i, \;1\leq i \leq N_p = (N_s)^{N}$, to each possible sequence, we let $\mathbf{y}(\mathbf{U}_i)$ represent $\mathbf{y}$ corresponding to each $\mathbf{U}_i$ and collect them into the following matrix:
\begin{equation*}
Y(\mathbf{U}_{1:N_p}) = \begin{bmatrix}
\mathbf{y}(\mathbf{U}_1) & \mathbf{y}(\mathbf{U}_1), & \cdots & \mathbf{y}(\mathbf{U}_{N_p})
\end{bmatrix}.
\end{equation*}
Then the following closed-form solution computes all the coefficients of candidate polynomials obtained from possible sequences of skeleton points:
\begin{equation}
\begin{bmatrix}
\mathbf{g}_1 & \mathbf{g}_2 & \cdots & \mathbf{g}_{N_p}   
\end{bmatrix}= K^{-1}Y(\mathbf{U}_{1:N_p})
\label{eqn_mat_multi}
\end{equation}
From $\mathbf{g}_i \; (1 \leq i \leq N_p)$, we can recover the polynomial coefficients of the whole set of candidate trajectories. \textcolor{black}{An example of the candidate chasing polynomials given view skeletons can be found in Fig. \ref{fig_candidate}-(b).}

\begin{figure}[t] 
\centering
\includegraphics[width=0.42\textwidth]{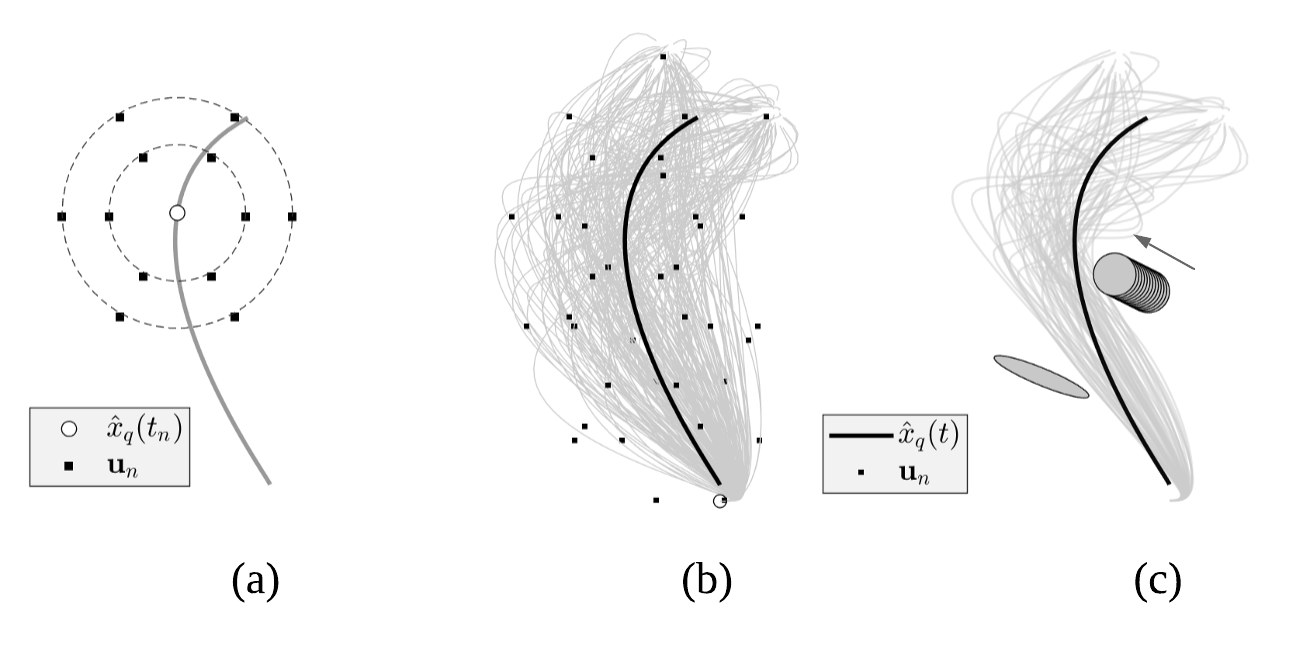}
\caption{(a) View skeletons $\mathbf{u}_n$ at a discrete target point $\targetPos(t_n)$. 
(b) The candidate polynomials (gray curves) around the target trajectory $\targetPos(t)$, which are based on the skeleton  $\mathbf{u}_n$.
(c) The set of primitives satisfying the proposed feasibility test. An obstacle is dynamic and moves along the direction of the arrow. } 
\label{fig_candidate}
\end{figure}

\section{Feasibility check for safety and visibility}
\label{sec_feasi_check}
In the previous section, we have described how to generate a candidate chasing trajectory around the predicted trajectory of the target. Now, we investigate whether each candidate polynomial $\chaserPos(t)$ satisfies safety and visibility for $0< t \leq T$ as formulated in the feasibility condition \eqref{eqn_feasibility}, given the polynomials of the target $\targetPos(t)$ and obstacles $(\obstaclePos(t),R)$ (subscript $j$ is omitted from \eqref{eqn_feasibility}).
In order to discuss the feasibility test, we derive a lower bound $d_{low}(t)$ of the feasibility function $d(t)$  in \eqref{eqn_feasibility_function}.  
The rationale of introducing $d_{low}(t) \leq d(t)$ is 1) to check the feasibility by simply counting the real roots of polynomial functions to be explained later and 2) to obtain sufficiency of the test. 

First, we explain the exact evaluation of  $d(t)$. 
A point $\mathbf{s}(t)$ on the line segment $L(\chaserPos(t),\targetPos(t))$ in \eqref{eqn_feasibility_function} can be expressed as $\mathbf{s}(t)  = (1-s)\chaserPos(t) +
s\targetPos(t)$ by introducing a scalar $s\in [0,1]$. 
Then,  \eqref{eqn_feasibility_function} can be rewritten as \eqref{eqn_feasibility_s} after defining a function $f(s,t) = \lVert (1-s)\chaserPos(t) +
s\targetPos(t) - \obstaclePos(t) \rVert^{2}_{R}$:
\begin{equation}
    d(t) = \underset{0 \leq s \leq 1}{\min} f(s,t),
    \label{eqn_feasibility_s}
\end{equation}
As the vector elements of $\chaserPos(t)$, $\targetPos(t)$ and $\obstaclePos(t)$ are all polynomials with respect to $t$,  we can observe that $f(s,t)$ is a bivariate polynomial whose degree is two with respect to $s$ and twice the largest order among $\chaserPos(t)$, $\targetPos(t)$ and $\obstaclePos(t)$. \textcolor{black}{In this work, we set the degree of $\chaserPos(t)$ to be the largest.} Noting this, $f(s,t)$  can be expressed as the following form: 
\begin{equation}
    f(s,t) = a_2(t)s^2+a_1(t)s+a_0(t),
    \label{eqn_f_quadratic}
\end{equation}
with the following coefficients
\begin{equation}
\begin{cases}
a_0(t) =  (\obstaclePos(t) - \chaserPos(t))^{\top}R(\obstaclePos(t) - \chaserPos(t)),\\
a_1(t) =  -2(\obstaclePos(t) - \chaserPos(t))^{\top}R(\targetPos(t)-\chaserPos(t)), \\ 
    a_2(t) = (\targetPos(t) - \chaserPos(t))^{\top}R(\targetPos(t) - \chaserPos(t)).
\end{cases}    
\label{eqn_coeff}
\end{equation}
As \eqref{eqn_f_quadratic} is a quadratic function on $s$ with a positive leading coefficient $a_2(t)$,  $d(t)$ can be determined analytically: 
\begin{equation}
    d(t) = 
    \begin{cases}
    a_0(t),  & \mathrm{if} \;\; \dfrac{a_1(t)}{-2a_2   (t)} < 0 \\
    -\frac{{(a_1(t))}^{2}}{4a_2(t)} +a_0(t), & \mathrm{if} \;\; 0 \leq \dfrac{a_1(t)}{-2a_2(t)} < 1\\
    a_2(t) + a_1(t) + a_0(t) & \mathrm{if} \;\; \dfrac{a_1(t)}{-2a_2(t)} \geq 1
    \end{cases}
    \label{eqn_exact_feasibility}
\end{equation}
The expression \eqref{eqn_exact_feasibility} with divided intervals is not convenient to check whether $d(t)>1$ for $0<t\leq T$.   
Rather than checking $d(t)>1$, we introduce a faster and sufficient test based on a lower bound of $d(t)$, by counting only the number of real roots of polynomials (more details on the advantages of using the lower bound is found in the next subsection).    
In order to derive the lower bound of the feasibility function $d(t)$ with a simpler form, we note the following property of a quadratic function: 

\begin{proposition}
\textit{For a univariate quadratic function $f(s) = a_2 s^2 + a_1 s + a_0$ with positive leading coefficient $a_2$, $\underset{0\leq s \leq 1}{\min} f(s) \geq \min \{d_p,d_q,d_s \}$ holds, where $d_p = a_0,d_q = a_0 + a_1 + a_2$ and $ d_s = a_1/2+a_0$}.
\label{lemma_quadratic}
\end{proposition}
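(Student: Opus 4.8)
The plan is to prove the inequality by a case analysis on the location of the unconstrained vertex $s^{\ast} = -a_1/(2a_2)$ of the quadratic $f$, which is well defined since $a_2>0$. The three cases correspond exactly to the three branches already identified in \eqref{eqn_exact_feasibility}: the vertex lies to the left of the unit interval ($s^{\ast}\le 0$), inside it ($0 < s^{\ast} < 1$), or to its right ($s^{\ast}\ge 1$). In the two endpoint cases the parabola is monotone on $[0,1]$, so the constrained minimum $\min_{0\le s\le 1} f(s)$ is attained at $s=0$ or $s=1$ and therefore equals $d_p = f(0) = a_0$ or $d_q = f(1) = a_0+a_1+a_2$, respectively. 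In either situation the claimed bound $\min_{0\le s\le 1} f(s) \ge \min\{d_p,d_q,d_s\}$ holds trivially, because the left-hand side is itself one of the three quantities on the right.

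The only substantive case is the interior one, $0 < s^{\ast} < 1$, where the constrained minimum equals the true vertex value $f(s^{\ast}) = a_0 - a_1^2/(4a_2)$. Here I would establish the stronger statement $f(s^{\ast}) \ge d_s$, which immediately yields the desired bound. Writing $d_s = a_0 + a_1/2$, this reduces to $-a_1^2/(4a_2) \ge a_1/2$. Since $a_2>0$, I would clear the denominator by multiplying through by $4a_2$, which preserves the inequality direction, turning the claim into the factored form $a_1(a_1+2a_2) \le 0$.

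It then remains to verify that this product is nonpositive under the interior-case hypothesis. The conditions $0 < s^{\ast} < 1$ combined with $a_2>0$ translate into $-2a_2 < a_1 < 0$; hence $a_1 < 0$ while $a_1 + 2a_2 > 0$, so their product is strictly negative and $f(s^{\ast}) \ge d_s$ follows. I expect the only nontrivial step to be precisely this translation of the vertex-location inequalities into the sign conditions on $a_1$ and $a_1+2a_2$, since the endpoint cases are immediate and the remainder is a direct algebraic manipulation. I would close by noting why $d_s$ is worth introducing in the first place: unlike the exact vertex value $a_0 - a_1^2/(4a_2)$, which becomes a rational function once the coefficients $a_0(t),a_1(t),a_2(t)$ are substituted, each of $d_p$, $d_q$, and $d_s$ is an honest polynomial in $t$, so that the subsequent sufficient test $d_{low}(t)>1$ can be carried out by real-root counting on a single polynomial as intended.
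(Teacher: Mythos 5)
Your proof is correct, but it takes a genuinely different route from the paper. The paper's argument is geometric and case-free: it takes the tangent lines $g_0$ and $g_1$ to $f$ at $s=0$ and $s=1$, observes that they intersect at $s=1/2$ with common value $g_0(1/2)=g_1(1/2)=a_0+a_1/2=d_s$, and uses convexity of $f$ (positive leading coefficient) to conclude $f \geq g_0$ on $[0,1/2]$ and $f \geq g_1$ on $[1/2,1]$; the minimum of this piecewise-linear lower envelope is attained at $s\in\{0,1/2,1\}$, giving exactly the three values $\{d_p, d_s, d_q\}$. Your proof instead splits on the location of the vertex $s^{\ast}=-a_1/(2a_2)$: the two boundary cases are immediate since the constrained minimum is then $d_p$ or $d_q$ itself, and in the interior case you verify $f(s^{\ast}) \geq d_s$ algebraically, reducing it to $a_1(a_1+2a_2)\leq 0$, which follows from $-2a_2 < a_1 < 0$. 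Both arguments are sound and short. The paper's tangent construction explains \emph{why} the otherwise unmotivated quantity $d_s$ appears (it is the midpoint value of the tangent envelope) and avoids cases entirely; your version is more elementary, needs nothing beyond the monotonicity of a parabola on either side of its vertex, and yields slightly sharper information, namely which of the three bounds is the active one in each regime and that $f(s^{\ast}) > d_s$ strictly in the interior case. Your closing remark on why $d_s$ is preferable to the exact vertex value $a_0 - a_1^2/(4a_2)$ — polynomial versus rational in $t$ — matches the paper's stated rationale for working with $d_{low}(t)$ and Sturm-based root counting.
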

\begin{proof}                              
Let us define two  functions that are tangent to $f(s)$ at $s=0$ and $s=1$ as $g_0(s)$ and $g_1(s)$. One can derive that $g_0$ and $g_1$ intersect at $s=1/2$ with $g_0(1/2) = g_1(1/2) =a_1/2+a_0 $. As $f(s)$ is a convex function with a positive leading coefficient, $f(s) \geq g_0(s)$ for $0\leq s \leq 1/2$ and $f(s) \geq g_1(s)$ for $1/2 \leq s \leq 1$. Thus, the minimum of $f(s)$ on $[0,1]$ is larger than the minimum of $g_0(s)$ on $\; 0\leq s \leq 1/2$ and minimum of $g_1(s)$ on $ \; 1/2\leq s \leq 1$, which are one of the three values $ \{ a_0,a_0 + a_1 + a_2,a_1/2+a_0 \} $. 
\end{proof}



\begin{proposition}
Let us define a function $d_{low}(t)$ by taking the minimum element from a set of three polynomials with respect to $t$ on an interval $[0,T]$:
\begin{equation*}
    d_{low}(t) =\min \{d_p(t),d_q(t),d_s(t)\},
\end{equation*}
where
\begin{gather*}
    d_p(t) = (\chaserPos(t) - \obstaclePos(t))^{\top}R(\chaserPos(t) - \obstaclePos(t)), \\
d_q(t) = (\targetPos(t) - \obstaclePos(t))^{\top}R(\targetPos(t) - \obstaclePos(t)), \\
d_s(t) = (\targetPos(t) - \obstaclePos(t))^{\top}R(\chaserPos(t) - \obstaclePos(t)), 
\end{gather*}
Then, for the feasibility function, $d(t)\geq d_{low}(t)$ holds for  $0\leq t \leq T$.
\label{proposition1}
\end{proposition}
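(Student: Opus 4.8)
The plan is to apply Proposition~\ref{lemma_quadratic} pointwise in $t$ and then to match the three resulting algebraic expressions with the polynomials $d_p(t),d_q(t),d_s(t)$ defined in the statement. First I would fix an arbitrary $t\in[0,T]$ and view $f(s,t)$ from \eqref{eqn_f_quadratic} as a univariate quadratic in $s$ with coefficients $a_0(t),a_1(t),a_2(t)$ given in \eqref{eqn_coeff}. Since $R$ is symmetric positive definite and $\targetPos(t)\neq\chaserPos(t)$, the leading coefficient $a_2(t)=(\targetPos(t)-\chaserPos(t))^{\top}R(\targetPos(t)-\chaserPos(t))$ is positive, so the hypothesis of Proposition~\ref{lemma_quadratic} is satisfied and it yields
\begin{equation*}
d(t)=\min_{0\leq s\leq 1}f(s,t)\geq \min\{\,a_0(t),\;a_0(t)+a_1(t)+a_2(t),\;a_1(t)/2+a_0(t)\,\}.
\end{equation*}

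The core of the argument is then to show that these three quantities coincide exactly with $d_p(t),d_q(t),d_s(t)$. Writing $u=\obstaclePos(t)-\chaserPos(t)$ and $w=\targetPos(t)-\chaserPos(t)$, the coefficients read $a_0=u^{\top}Ru$, $a_1=-2u^{\top}Rw$, and $a_2=w^{\top}Rw$. The identity $a_0(t)=d_p(t)$ is immediate since $u^{\top}Ru=(-u)^{\top}R(-u)$. For $d_q$, I would collapse $a_0+a_1+a_2=(u-w)^{\top}R(u-w)$ using the symmetry of $R$, and observe the telescoping $u-w=\obstaclePos(t)-\targetPos(t)$, which gives exactly $d_q(t)$. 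For $d_s$, I would compute $a_1/2+a_0=u^{\top}R(u-w)=(\obstaclePos(t)-\chaserPos(t))^{\top}R(\obstaclePos(t)-\targetPos(t))$, flip both signs, and invoke the symmetry of $R$ once more to rewrite this as $(\targetPos(t)-\obstaclePos(t))^{\top}R(\chaserPos(t)-\obstaclePos(t))=d_s(t)$.

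Combining the three identifications gives $d(t)\geq\min\{d_p(t),d_q(t),d_s(t)\}=d_{low}(t)$ for the fixed $t$; since $t\in[0,T]$ was arbitrary, the bound holds on the whole interval, which is the claim. I expect the main obstacle to be purely bookkeeping rather than conceptual: keeping the sign conventions straight between the $\obstaclePos-\chaserPos$ parametrization used in the coefficients \eqref{eqn_coeff} and the $\chaserPos-\obstaclePos$, $\targetPos-\obstaclePos$ forms used in the statement, and applying the symmetry of $R$ at precisely the right places, especially in the cross term $d_s$. A minor edge case worth a remark is $\targetPos(t)=\chaserPos(t)$, where $a_2(t)=0$ and $f$ degenerates to a constant; there the segment collapses to a point and $d(t)=d_p(t)=d_q(t)=d_s(t)$, so the inequality holds trivially without appealing to Proposition~\ref{lemma_quadratic}.
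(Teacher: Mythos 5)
Your proof is correct and follows essentially the same route as the paper's: apply Proposition~\ref{lemma_quadratic} pointwise in $t$ to the quadratic $f(s,t)$ and identify the three values $a_0(t)$, $a_0(t)+a_1(t)+a_2(t)$, $a_1(t)/2+a_0(t)$ with $d_p(t)$, $d_q(t)$, $d_s(t)$ by substituting \eqref{eqn_coeff}. Your explicit sign bookkeeping and your treatment of the degenerate case $\targetPos(t)=\chaserPos(t)$ (where $a_2(t)=0$) are slightly more careful than the paper's proof, which leaves both implicit.
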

\begin{proof}
According to \textbf{Proposition \ref{lemma_quadratic}}, it follows that $d(t) = \min_{0\leq s \leq 1} f(s,t) \geq \min\{d_p(t),d_q(t),d_s(t)\}$ where $d_p(t) = a_0(t),\; d_q(t) = a_0(t)+a_1(t)+a_2(t)$ and $d_s(t) = a_1(t)/2+a_0(t)$, when $f(s,t)$ is written as \eqref{eqn_f_quadratic}. Here, we can express  $d_p(t)$, $d_q(t)$ and $d_s(t)$ with $\chaserPos(t)$, $\targetPos(t)$, $\obstaclePos(t)$ and $R$ by plugging $a_0(t)$, $a_1(t)$ and $a_2(t)$ from \eqref{eqn_coeff}. 
          
\end{proof}
\textcolor{black}{Based on Proposition 2, we now state the following lemma where the lower-bound function $d_{low}(t)$ is used to test whether feasibility condition $d(t) > 1$ holds for  $\forall t \in (0,T]$. }
\begin{lemma}
Given $d_{low}(0) > 1$,  the safety of $\chaserPos(t)$ and visibility of $\targetPos(t)$ from $\chaserPos(t)$ are guaranteed for $\forall t \in (0,T]$ if none of three polynomial equations $d_q(t)=1$, $d_p(t)=1$ and $d_s(t)=1$ has a real root.  
\end{lemma}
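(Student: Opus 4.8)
The plan is to combine Proposition 2 with an elementary continuity argument, so that the full feasibility condition \eqref{eqn_feasibility} collapses to three independent univariate checks. First I would recall from Proposition 2 that $d(t) \geq d_{low}(t) = \min\{d_p(t),d_q(t),d_s(t)\}$ on $[0,T]$. Since the construction in \eqref{eqn_feasibility_function} and \eqref{eqn_feasibility} shows that $d(t) > 1$ on $(0,T]$ is exactly the statement that $\chaserPos(t)$ is safe and that $\targetPos(t)$ is visible from $\chaserPos(t)$ against the obstacle, it suffices to establish $d_{low}(t) > 1$ there. That in turn amounts to showing that each of the three polynomials $d_p(t)$, $d_q(t)$, $d_s(t)$ stays strictly above $1$ throughout $(0,T]$.

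Next I would handle each of the three polynomials separately by an intermediate-value argument. The hypothesis $d_{low}(0) > 1$ unpacks to $d_p(0) > 1$, $d_q(0) > 1$ and $d_s(0) > 1$ simultaneously, so each function starts strictly above $1$, and each of $d_p - 1$, $d_q - 1$, $d_s - 1$ is a polynomial in $t$ and hence continuous. If, say, $d_p(t) \leq 1$ held for some $t \in (0,T]$, then because $d_p(0) - 1 > 0$, the intermediate value theorem would force $d_p - 1$ to vanish at some point of $(0,T]$, i.e. $d_p(t) = 1$ would possess a real root on the interval, contradicting the assumption. The identical reasoning applies to $d_q$ and $d_s$. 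Thus every one of the three polynomials remains strictly greater than $1$ on all of $(0,T]$.

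Combining the two steps, $d_{low}(t) = \min\{d_p(t),d_q(t),d_s(t)\} > 1$ on $(0,T]$, whence Proposition 2 gives $d(t) \geq d_{low}(t) > 1$ on the same interval, which is precisely the feasibility condition certifying both safety and visibility.

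I expect the only delicate point to be the interaction between the strictness of $d_{low}(0) > 1$ and the interval on which roots are excluded: the argument requires the absence of roots on $(0,T]$ specifically, exactly as in the prediction check of Section \ref{sec_prediction}, and it is the strict inequality at the single endpoint $t=0$ that lets continuity propagate the strict bound across the whole horizon. The root-absence hypothesis itself is then verified, as before, by running a real-root counting routine (Sturm's theorem) on the three polynomials, whose degrees are inherited from those of $\chaserPos(t)$, $\targetPos(t)$ and $\obstaclePos(t)$; this is what makes the test cheap, but it is logically separate from the implication the lemma asserts.
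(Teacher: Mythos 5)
Your proposal is correct and follows essentially the same argument as the paper's own proof: both use the fact that $d_{low}(0)>1$ forces all three polynomials $d_p$, $d_q$, $d_s$ above $1$ at $t=0$, invoke continuity (your intermediate value theorem step is just a more explicit version of the paper's ``none of them intersects $1$'' remark) to keep them above $1$ on $(0,T]$, and then conclude via Proposition \ref{proposition1} that $d(t)\geq d_{low}(t)>1$, which by \eqref{eqn_feasibility_function} and \eqref{eqn_feasibility} is exactly safety plus visibility. No gaps; your closing observation that Sturm-based root counting is the practical verification of the hypothesis, but logically separate from the lemma's implication, is also consistent with how the paper uses the result.
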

\begin{proof}
The condition $d_{low}(0) > 1$ implies that the three polynomials $\{d_p(t),d_q(t),d_s(t)\}$ are larger than $1$ at $t = 0$. As the three polynomials are continuous, all of them remain larger than $1$ in $(0,T]$ if none of them intersects $1$ in the interval. Given this, $d_{low}(t) > 1$ holds for the entire duration $(0,T]$, leading to  $d(t)\geq d_{low}(t)>1$. Thus,  the safety and visibility are guaranteed from the definition of $d(t)$ in \eqref{eqn_feasibility_function}.
\end{proof}

According to \textbf{Lemma 1}, the feasibility for $\chaserPos(t)$ can be strictly guaranteed by checking non-existence of the real roots of all the three polynomial equations $d_q(t)=1$, $d_p(t)=1$ and $d_s(t)=1$, given the satisfied initial condition $d_{low}(0)>1$. Recalling that the prediction $\targetPos(t)$ computed in Section \ref{sec_prediction} has already ensured no real roots of $d_q(t)=1$ in the considered interval, 
only the remaining two equations need to be checked in a similar way with the feasibility check for prediction. An example of primitives which passed the proposed test is demonstrated in Fig. \ref{fig_candidate}-(c). 
Among the feasible candidate chasing primitives, we determine the final chasing trajectory as the one that best minimizes the cost in \textbf{Problem 2}.

\begin{figure}[t] 
\centering
\includegraphics[width=0.48\textwidth]{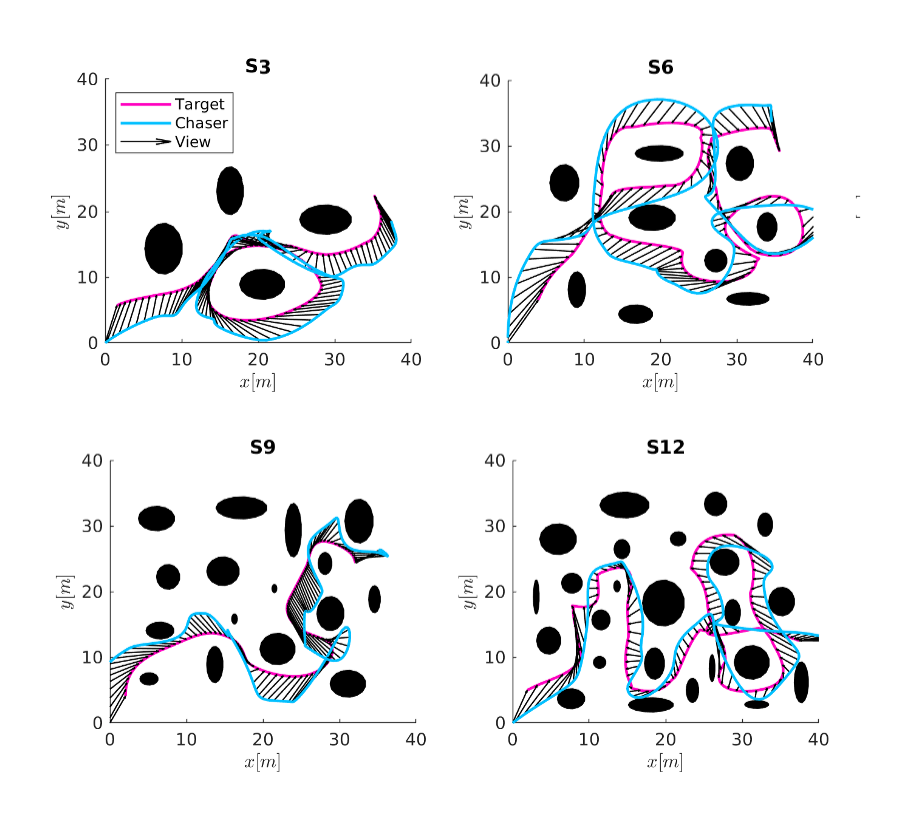}
\caption{Simulation results of S3, S6, S9, and S12 among tested 12 scenarios  (S1-S12). } 
\label{fig_comparison_jbs_only}
\end{figure}

\begin{figure}[t] 
\centering
\includegraphics[width=0.48\textwidth]{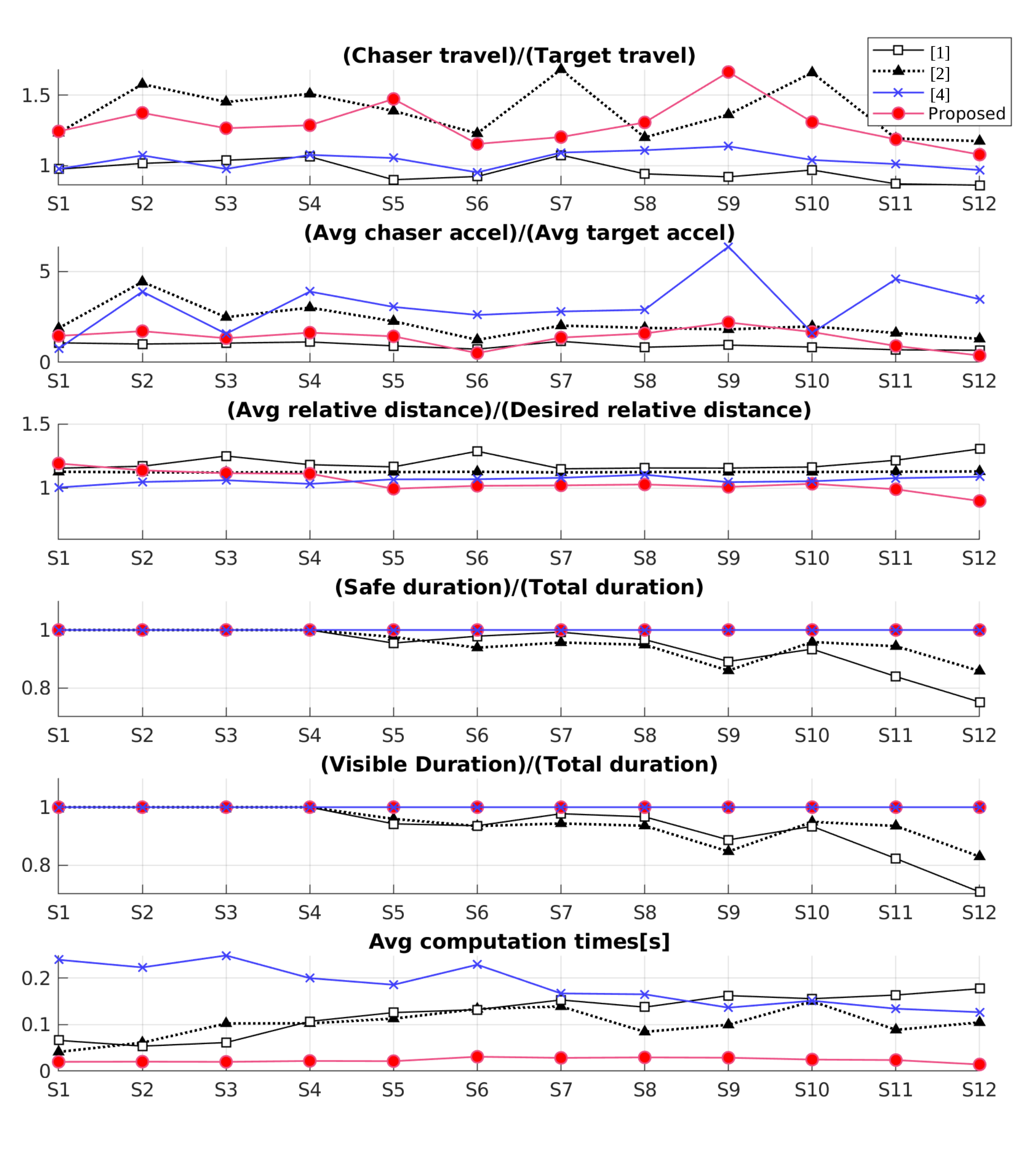}
\caption{Comparison result of the benchmark simulations S1-S12 with the state-of-the-art algorithms. \textcolor{black}{The x-axis denotes the index of the scenario.} } 
\label{fig_comparison_plot}
\end{figure}

\section{validations}
\label{sec_validation}

We now validate the presented chasing method in three different ways. 
First, we analyze the advantages of our algorithm from the comparison with three different RHPs including two non-convex optimization-based methods \cite{nageli2017real,penin2018vision} and a method using hierarchical planning structure \cite{jeon2002online}.
Second, the algorithm is validated using Airsim \cite{shah2018airsim} in high-fidelity game engine, where a virtual drone operates in a dense forest containing multiple dynamic obstacles and static obstacles.  
Third, an experiment is conducted by building a chasing drone with fully-onboard implementation of all the pipeline: odometry of the drone, mapping of the obstacles, localization and prediction of the dynamic target, chasing planner and flight control.     

\subsection{Comparison simulation}
To compare our chasing strategy with other works \cite{nageli2017real,penin2018vision,jeon2002online}, 
we consider 12 scenarios (S1-S12) built from 4 different maps and 3 variations of the planar target's movement for the duration of $50$ s in each map. The target's trajectory of S3, S6, S9, and S12 and obstacles are visualized in Fig. \ref{fig_comparison_jbs_only}.
Overall, the level of difficulty for chasing rises from S1 to S12, as the density of obstacles increases and the distance between the target and obstacles decreases. 
For all the four RHPs including ours, the planning horizon was set to $5$ s.
\textcolor{black}{ To focus on the planning aspect, in this simulation, the target's future trajectory over the horizon is given to the planners at each moment of replanning. All the obstacles are static and known a priori to the planners. The parameters for \cite{nageli2017real,penin2018vision} were selected as the best-performing one among a set of \textcolor{black}{$20$} pre-tuned values, and NLopt \cite{johnson2014nlopt} was utilized as the optimization solver. For our algorithm, in contrast, we used the same parameters for all 12 scenarios by using a candidate set of the $1,728$ quintic polynomial primitives ($N_s = 12$ and $N = 3$).
The simulations were performed on a standard i7 laptop computer with 16 GB RAM.  }

The comparison results of key performance measures in the 12 scenarios are summarized  in Fig. \ref{fig_comparison_plot}. 
As can be seen from the ratios of safe and visible duration, 
the planners based on the non-convex optimization failed to satisfy the essential conditions in  more difficult scenarios (S5-S12) where  the successful duration of \cite{penin2018vision} drops by almost $30$ percentage for S12. 
In contrast, our algorithm found the feasible chasing trajectories that
maintain the safety and visibility conditions in all cases even with the same parameter setting. 
The RHP of \cite{jeon2002online} also could ensure the feasibility conditions, by yielding a feasible corridor in the given resolution settings ($0.3$ m).  

In most cases, the hierarchical planner  performed best in terms of maintaining the desired relative distance, in that the measure was close to $1.00$ most closely.  
 However, the trajectory from \cite{jeon2002online} turns out to be the least smooth in general, when observed from acceleration costs (see second row of Fig. \ref{fig_comparison_plot}). \textcolor{black}{This is because the the trajectory is confined in the corridors which do not consider the smoothness} In contrast, the proposed performed the best for the acceleration cost.
 Regarding the average computation time, the proposed method consumed the lowest computation time in an order of $20 $ ms.

\subsection{High-fidelity simulation} \label{subsec:high-fidel}
Now, we test the proposed algorithm with the scenario where the environment is not known a priori and multiple dynamic obstacles are included in addition to a dense set of  static obstacles.
For the entire mission time of $53$ s in a forest of $100$ m $\times \;70$ m, the target moves at a constant speed of $3$ m/s, whose trajectory is marked by a red curve in  Fig. \ref{fig_unreal_result1}-(a). 
We used $4,096$ ($N_s = 8,\; N = 4$)  candidate chasing trajectories with a horizon of $2.5$ s. 
All the obstacles are discovered by a 16-channeled LiDAR sensor attached to the drone. 
The dynamic obstacles are represented with an elephant and a horse in the simulation, as visualized in Fig. \ref{fig_unreal_result1}-(b).

\begin{figure*}[t] 
\centering
\includegraphics[width=0.9\textwidth]{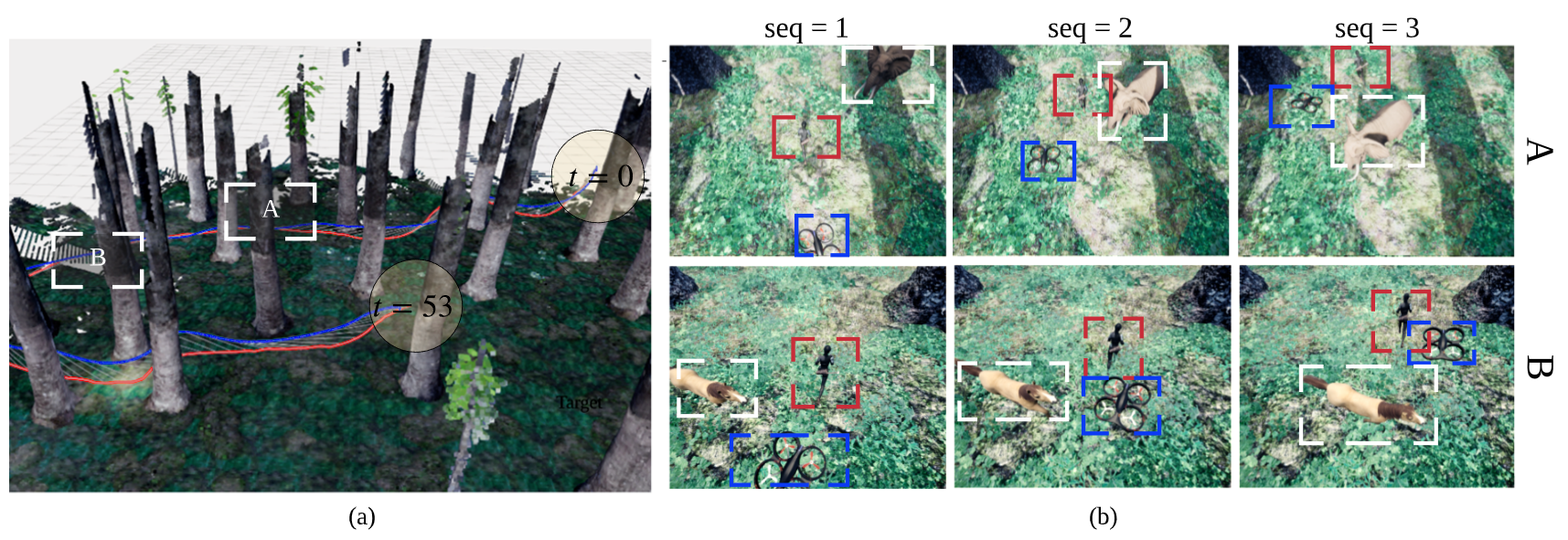}
\caption{(a) Perspective view of the position histories of the target (red) and the chaser (blue). The dynamic obstacles approach the chaser in the regions A and B. (b) Three snapshot sequence of the regions A and B  \textcolor{black}{over $2$ s (i.e. $1$ s intervals)} . The chaser and the target are denoted as the dashed boxes of the same color with (a). The dynamic obstacles (an elephant and a horse) are denoted in white boxes.    } 
\label{fig_unreal_result1} 
\end{figure*}

The trajectory of the drone can be found in Fig. \ref{fig_unreal_result1}-(a). The white-boxed regions A and B highlight the regions where the chaser encounters the dynamic obstacles.
The key snapshots taken from the indicated regions are in the \ref{fig_unreal_result1}-(b). There were two possible ways for the drone to avoid dynamic obstacles, for example, in A), avoiding to the right of the elephant or to the left. Although the former motion required less actuation than the latter, the chaser accelerated to take the latter for visibility of the target.    
The chasing planner ran at $25$ Hz in the simulation setting, despite the heavy computation load required to run the game engine in the same computer (Intel i7 and 32GB RAM). In the simulation, the distance ratio of the drone to the target was  $1.1$, showing the travel efficiency of our algorithm.
The results demonstrate that the motion of the drone planned by our method ensured the chasing objectives and feasibility conditions even in the challenging scenario which includes multiple high-speed dynamic obstacles and dense static obstacles.

\subsection{Real world experiment}
We now present the experimental result with the fully onboard integration of the proposed pipeline visualized in Fig. \ref{fig: onboard pipeline}. 
The integration on a chasing drone includes the visual odometry for the ego-motion, target perception with localization, obstacles mapping, target prediction and  motion planning. 
The experiment was performed in an indoor gym to test the real-world applicability of the proposed algorithm in  a dense obstacle setting (see Fig. \ref{fig_realworld_setting}-(a).  In an area of $8$ m $\times$ $15$ m, seven cylindrical obstacles of $56$ cm in diameter are set up.
For the target to be chased by the drone, we used a mobile robot with an orange cylinder as shown in Fig. \ref{fig_realworld_setting}-(b). The target was remotely operated by a human, and  its  future trajectory  is not known to the chaser.

An RGB monocular camera mvBlueFOX-MLC200 was used for the target perception. For  odometry, mapping and target localization, we used the Velodyne VLP16-Lite LiDAR.
For the flight controller, we used Pixhawk 4 FCU. Intel i7 NUC is mounted to the drone for onboard computation.   
For the chasing planner, we set the horizon as $2.5$ s and generated $1,296$ ($N_s = 6$ and $N = 4$)  candidate primitives. \textcolor{black}{The planned chasing motion is executed using the geometric controller \cite{lee2010geometric} and the state of the drone is fed from Lidar visual odomtery \cite{zhang2014loam}.}

\begin{figure}[t] 
\centering
\includegraphics[width=0.4\textwidth]{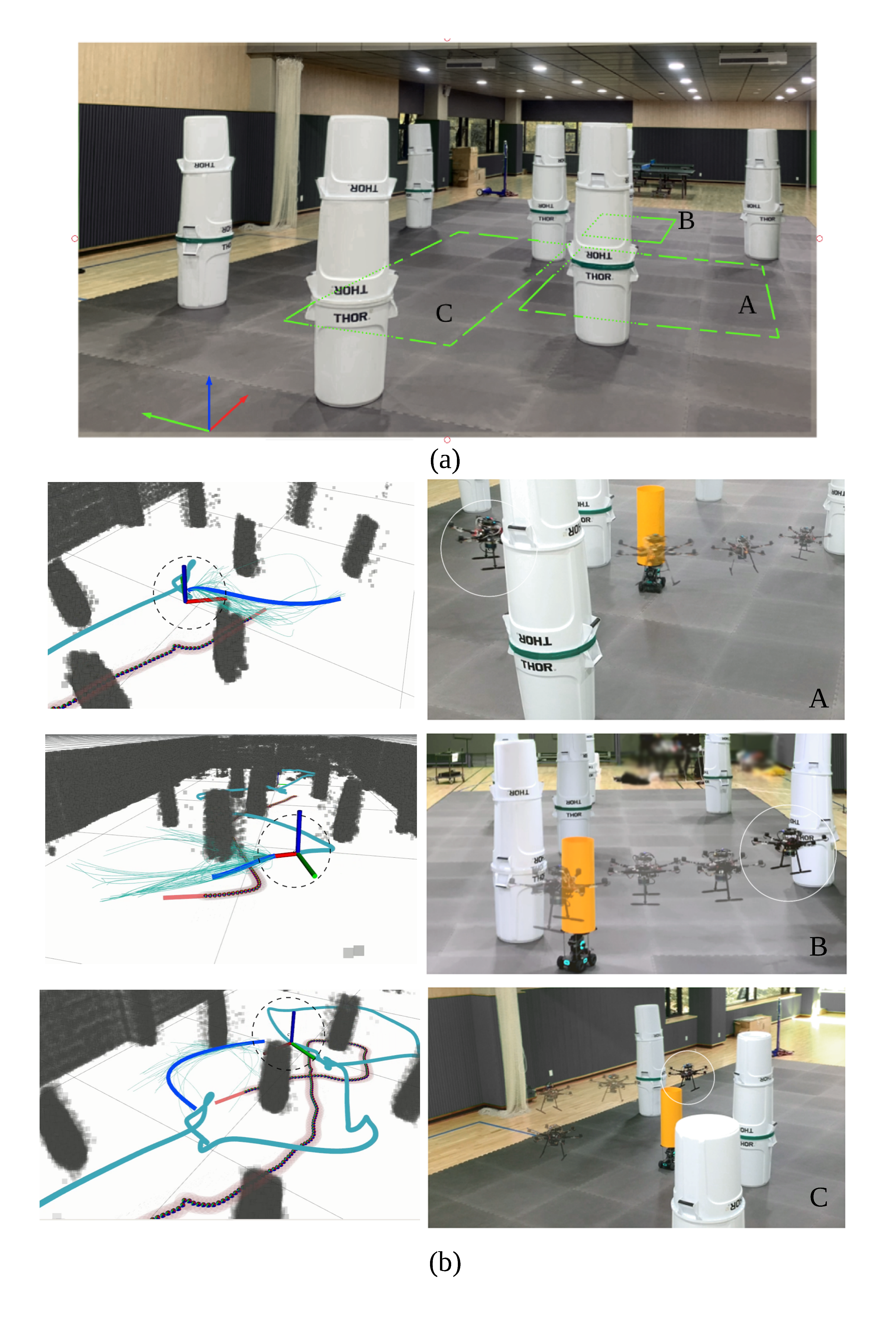}
\caption{(a) The environment for testing the proposed chasing pipeline. The target starts near the origin of the coordinate axis, and passes the green-dashed regions A, B, and C one by one. (b) Left: planning trajectory (blue) and target prediction (red). Right: corresponding snapshots of the left column.} 
\label{fig_realworld_setting}
\end{figure}

The snapshots taken in the scenes  A, B, and C in Fig. \ref{fig_realworld_setting}-(a) are shown in the right column of Fig. \ref{fig_realworld_setting}-(b) with the algorithm visualization in the left. 
As can be seen in the figure, the drone maintains its own safety and prevents the target-occlusion, while circumventing the obstacles.
The overall computation time for the prediction and planning was  $30$ ms on average, while  the entire pipeline such as the SLAM  and target detection algorithm ran on the same computer. The center of the detected target pixels on the image view is plotted in Fig. \ref{fig_realworld_chaserimage}, showing that the target is observed from the drone successfully during the entire mission. 

\begin{figure}[t] 
\centering
\includegraphics[width=0.4\textwidth]{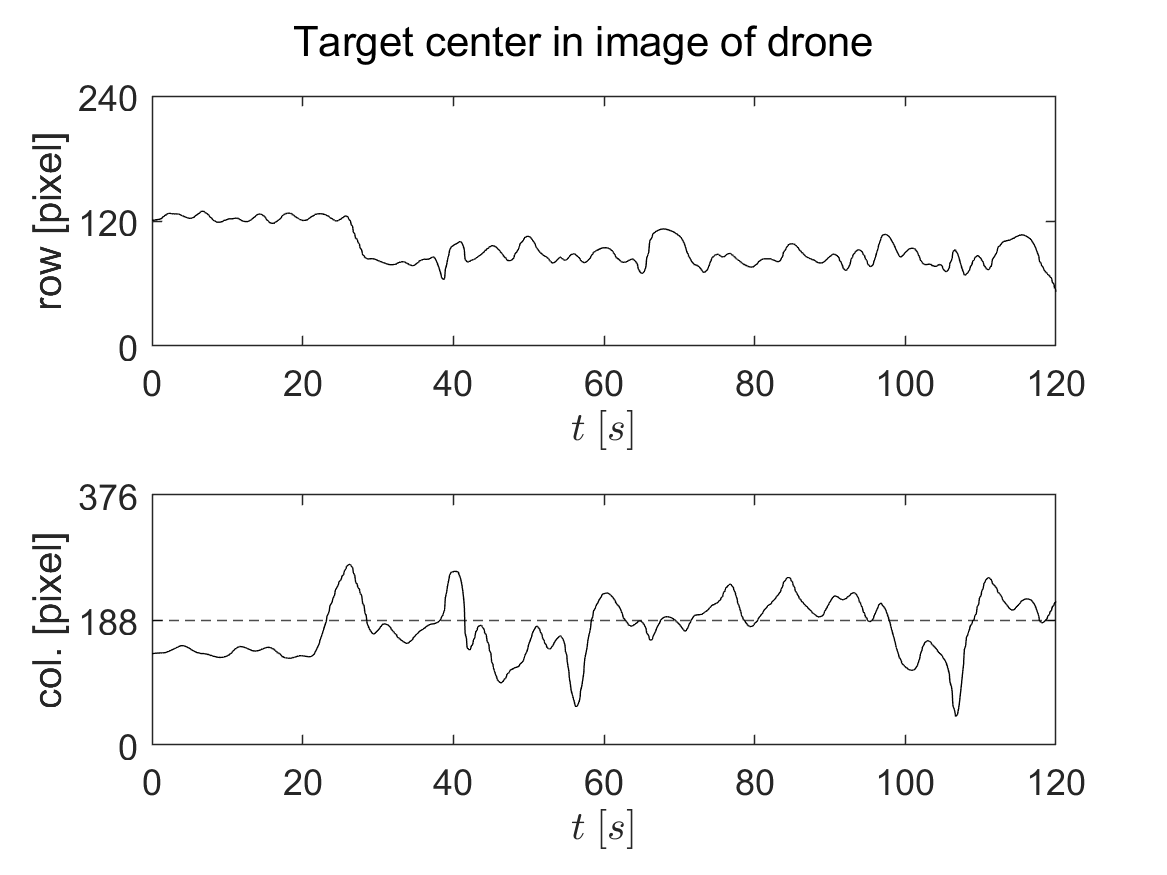}
\caption{The center of the detected target pixels from the image of the chaser drone. The size of the image is $376 \times 240$.  } 
\label{fig_realworld_chaserimage}
\end{figure}

\section{Conclusion}
This work proposed an efficient chasing system to tackle the difficult setting where the chaser has no information about the future target movement and obstacles. 
The system includes the target forecaster and the chasing planner, based on the sample-and-check approach introduced to overcome the drawback of the existing methods. 
To validate the performance of the pipeline, we conducted comparative simulation showing that the proposed approach  better ensured the essential conditions than the non-convex optimization method, and the trajectory of the chaser was smoother than the hierarchical method. Also, the computation time was the fastest among  the state-of-the-art algorithms. 
The entire pipeline was implemented on a real drone, showing the real-world applicability.
\textcolor{black}{For the future works, we will investigate an efficient method to adapt the number of candidate motion primitives and the configuration of the view skeleton on-the-fly, depending on the obstacles and target's movement. }


\bibliographystyle{ieeetr}
\bibliography{bibliography}
\biography{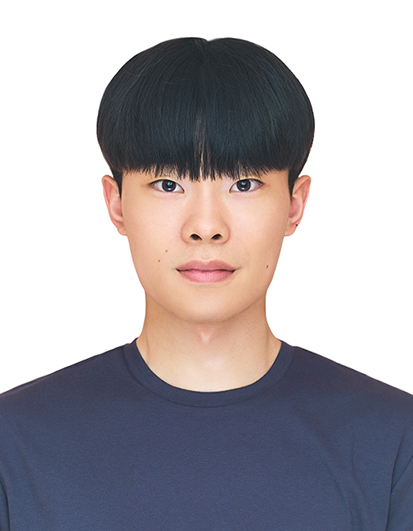}{Boseong Felipe Jeon}{received his B.S. degree in in Mechanical and Aerospace Engineering from Seoul National University in 2017. He is currently pursuing a Ph.D. degree in Aerospace Engineering at Seoul National University. His research interests include autonomous cinematography and aerial system.}
\biography{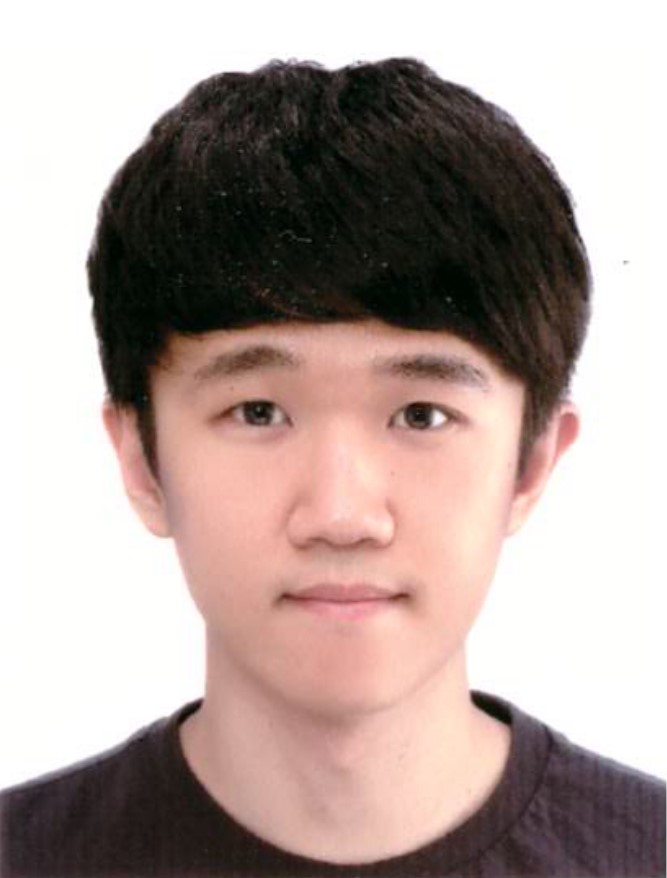}{Changhyeon Kim}{received the B.S. and M.S. degrees in Mechanical and Aerospace Engineering from Seoul National University in 2016 and 2018, respectively. He is currently pursuing a Ph.D. degree in Aerospace Engineering at Seoul National University. His research topics include 3-D reconstruction, visual navigation, camera-IMU-LiDAR fusion.}
\biography{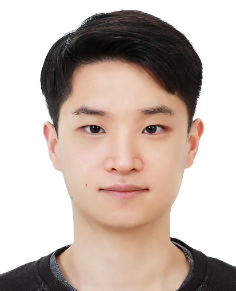}{Hojoon Shin}{is currently pursuing a B.S. degree in Mechanical and Aerospace Engineering at Seoul National University. His current research interests include multi-modal localization and mapping, collaborative navigation, and multi-agent systems.}
\biography{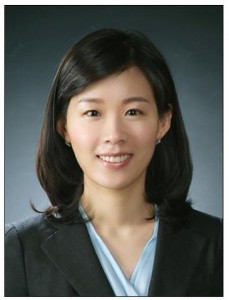}{H. Jin Kim}{received the B.S. degree
from Korea Advanced Institute of
Technology (KAIST) in 1995, and the
M.S. and Ph.D. degrees in Mechanical
Engineering from University of
California, Berkeley in 1999 and 2001,
respectively. From 2004-2009, she was an Assistant Professor in
the School of in Mechanical and Aerospace Engineering at
Seoul National University (SNU), Seoul, Korea, where she is
currently an Associate Professor. Her research interests include
applications of nonlinear control theory and artificial
intelligence for robotics, motion planning algorithm}

\clearafterbiography\relax

\end{document}